\definecolor{sarandonga}{rgb}{0.65,0.35,0}
\definecolor{urdin}{rgb}{0.4,0,0.6}
\newcommand{\M}[0]{{\mathcal M}}
\newcommand{\R}[0]{{\mathbb R}}
\newcommand{\x}[0]{{\mathbf x}}
\newcommand{\linkStrength}[0]{{\gamma}}
\title{A first approach to closeness distributions}
\author{Jesus Cerquides\\
IIIA-CSIC, Campus UAB, Cerdanyola\\ 08193 Barcelona, Spain}
\date{}
\begin{document}

\maketitle

\theoremstyle{plain}
\newtheorem{thm}{\protect\theoremname}
\theoremstyle{definition}
\newtheorem{defn}[thm]{\protect\definitionname}
\theoremstyle{plain}
\newtheorem{prop}[thm]{\protect\propositionname}
\theoremstyle{plain}
\newtheorem{lem}[thm]{\protect\lemmaname}
\theoremstyle{definition}
\newtheorem{example}[thm]{\protect\examplename}

\makeatother
\DeclareRobustCommand{\textgreek}[1]{\leavevmode{\greektext #1}}
\providecommand{\definitionname}{Definition}
\providecommand{\examplename}{Example}
\providecommand{\lemmaname}{Lemma}
\providecommand{\propositionname}{Proposition}
\providecommand{\theoremname}{Theorem}

\begin{abstract}
Probabilistic graphical models allow us to encode a large probability distribution as a composition of smaller ones. It is oftentimes the case that we are interested in incorporating in the model the idea that some of these smaller distributions are likely to be similar to one another. In this paper we provide an information geometric approach on how to incorporate this information and see that it allow us to reinterpret some already existing models and algorithms.
\end{abstract}

\section{Introduction}

\crefalias{prop}{proposition}


We start by introducing a simple example to illustrate the kind of problems we are interested in solving.  Consider the problem of estimating a parameter $\theta$ using data from a small experiment and a prior distribution constructed from similar previous experiments. The specific problem description is borrowed from \cite{andrew_gelman_bayesian_2013}:

\begin{quote}
{\bf Estimating the risk of tumor in a group of rats.}

In the evaluation of drugs for possible clinical application, studies are routinely performed on rodents. For a particular study drawn from the statistical literature, suppose the immediate aim is to estimate $\theta$, the probability of tumor in a population of female laboratory rats of type ‘F344’ that receive a zero dose of the drug (a control group). The data show that 4 out of 14 rats developed endometrial stromal polyps (a kind of tumor).  Typically, the mean and standard deviation of underlying tumor risks are not available. Rather, historical data are available on previous experiments on similar groups of rats. In the rat tumor example, the historical data were in fact a set of observations of tumor incidence in 70 groups of rats (\cref{table:rats}). In the $i$th historical experiment, let the number of rats with tumors be $y_i$ and the total number of rats be $n_i$. We model the $y_i$’s as independent binomial data, given sample sizes $n_i$ and study-specific means $\theta_i$. 
\end{quote}

\begin{table}[ht]
Previous experiments:

\begin{tabular}{ c c c c c c c c c c }
0/20 
& 0/20
& 0/20
& 0/20
& 0/20
& 0/20
& 0/20
& 0/19
& 0/19
& 0/19
\\ 0/19
& 0/18
& 0/18
& 0/17
& 1/20
& 1/20
& 1/20
& 1/20
& 1/19
& 1/19
\\ 1/18
& 1/18
& 2/25
& 2/24
& 2/23
& 2/20
& 2/20
& 2/20
& 2/20
& 2/20
\\ 2/20
& 1/10
& 5/49
& 2/19
& 5/46
& 3/27
& 2/17
& 7/49
& 7/47
& 3/20
\\ 3/20
& 2/13
& 9/48
& 10/50
& 4/20
& 4/20
& 4/20
& 4/20
& 4/20
& 4/20
\\ 4/20
& 10/48
& 4/19
& 4/19
& 4/19
& 5/22
& 11/46
& 12/49
& 5/20
& 5/20
\\ 6/23
& 5/19
& 6/22
& 6/20
& 6/20
& 6/20
& 16/52
& 15/47
& 15/46
& 9/24\end{tabular}
Current experiment: 4/14
\caption{\label{table:rats} Tumor incidence in 70 historical groups of rats and in the current group of rats (from \cite{tarone_use_1982}). The table displays the values of : (number of rats with tumors)/(total number of rats).}
\end{table}

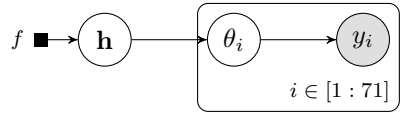
\begin{figure}
\begin{center}
\usetikzlibrary{bayesnet}
\usetikzlibrary{arrows,positioning} 
\begin{tikzpicture}[>=stealth']

  
  \node[latent]                            (h)   {$\mathbf{h}$};
  \node[latent, right=of h]            (theta_i) {$\theta_i$};
  \node[obs,    right=of theta_i]             (y_i)     {$y_i$};
  
  \factor[left=of h]              {f}   {left:$f$} {} {h} ;
  \edge {h} {theta_i} ; %
  \edge {theta_i} {y_i} ; %

  \plate {pl} {(theta_i)(y_i)} {$i\in [1:71]$} ;
  
\end{tikzpicture}

\end{center}
\caption{\label{figure:HierarchicalH}General Probabilistic graphical model for the rodents example.}
\end{figure}

\begin{figure}
\begin{center}
\usetikzlibrary{bayesnet}
\usetikzlibrary{arrows,positioning} 

\begin{tikzpicture}[>=stealth']

  \node[latent, right]            (theta_i) {$\theta_i$};
  \node[latent, left=0.6cm of theta_i]                             (a)   {$\alpha$};
  \node[latent, below=0.1cm of a]                             (b)  {$\beta$};

  \node[obs,    right=of theta_i]             (y_i)     {$y_i$};
  
  \factor[left=2cm of theta_i]  {f} {above:$(\alpha+\beta)^{-5/2}$} {} {a,b};
  \edge {f} {a} ; 
  \edge {f} {b} ; 
  \edge {a} {theta_i} ; 
  \edge {b} {theta_i} ; %
  \edge {theta_i} {y_i} ; %

  \plate {pl} {(theta_i)(y_i)} {$i\in [1:71]$} ;
  
\end{tikzpicture}
\end{center}
\caption{\label{figure:HierarchicalGelman}PGM for the rodents example proposed in \cite{andrew_gelman_bayesian_2013}.}.
\end{figure}
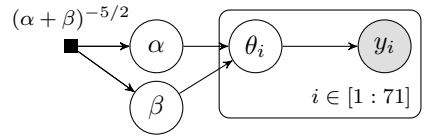

We can depict our graphical model as shown in \cref{figure:HierarchicalH}, where current and historical experiments are a random sample from a common population, having $\mathbf{h}$ as hyperparameters. Equationally our model can be described as: 
\begin{align}
    \mathbf{h} &\sim f\\
    \theta_i &\sim g(\mathbf{h})  &\forall i\in [1:71]\\
    y_i &\sim Binomial(n_i,\theta_i)& \forall i\in [1:71].
\end{align}

The model used for this problem in \cite{andrew_gelman_bayesian_2013} is the Beta-Binomial model, where  $g$ is taken to be the Beta distribution, hence $\mathbf{h} = (\alpha,\beta)$ (see \cref{figure:HierarchicalGelman}). Furthermore, in \cite{andrew_gelman_bayesian_2013} the prior $f$ over $\alpha,\beta$ is taken to be proportional to $(\alpha+\beta)^{-5/2}$, giving the model
\begin{align}
    \alpha,\beta &\propto (\alpha+\beta)^{-5/2}\\
    \theta_i &\sim Beta(\alpha,\beta)  &\forall i\in [1:71]\\
    y_i &\sim Binomial(n_i,\theta_i)& \forall i\in [1:71].
\end{align}
The presentation of the model in \cite{andrew_gelman_bayesian_2013} simply introduces the assumption that ``the Beta prior distribution with parameters $(\alpha,\beta)$ is a good description of the population distribution of the $\theta_i$’s in the historical experiments'' without further justification. In this paper we would like to show that a large part of this model can be obtained from the intuitive idea that the probability distributions for rats with tumors in each group are similar. To do that we develop a framework for encoding as a probability distribution the assumption that two probability distributions are close to each other, and rely on information geometric concepts to model the idea of closeness.

We start by introducing the general concept of closeness distribution in \cref{section:DPD}. Then, we analyze the particular case in which we choose to measure remoteness between distributions by means of the Kullback Leibler divergence in the family of multinomial distributions in \cref{sec:KL-closeness-multinomials}. The results from \cref{sec:KL-closeness-multinomials} are used in \cref{sec:Beta-Binomial-reinterpretation} to reinterpret the Beta Binomial model proposed in \cite{andrew_gelman_bayesian_2013} for the rodents example, and in \cref{sec:HDM-reinterpretation} to reinterpret the Hierarchical Dirichlet Multinomial model proposed by Azzimonti et al. in \cite{azzimonti_hierarchical_2017,azzimonti_hierarchical_2019,azzimonti_structure_2020}. We are convinced that closeness distributions could play a relevant role in probabilistic modelling, allowing for more explictly geometrically inspired probabilistic models. This paper is just a first step towards a proper definition and understanding of closeness distributions. 

\section{\label{section:DPD}Closeness distributions}
We start by introducing the formal framework required to discuss about probability distributions over probability distributions. Then, we formalize what we mean by remoteness through a remoteness function, and we introduce closeness distributions as those that implement a remoteness function. 

\subsection{Probabilities over probabilities\label{sec:ProbsOverProbs}}

Information geometry \cite{amari_information_2016} has shown us that most families of probability distributions can be understood as a Riemannian manifold. Thus, we can work with  probabilities over probabilities by defining random variables which take values in a Riemannian manifold. Here, we only introduce some fundamental definitions. For a more detailed overview of measures and probability see \cite{dudley_real_2002}, of Riemannian manifolds see \cite{jost_riemannian_2011}. Finally, Pennec provides a good overview of probability  on Riemannian manifolds in \cite{pennec_probabilities_2004}.

We start by noting that each manifold $\M$, has an associated $\sigma$-algebra, $\mathcal L_\M$, the Lebesgue $\sigma$-algebra of $\M$ (see section 1, chapter XII in \cite{amann_analysis_2009}). Furthermore, the existence of a metric $g$ induces a measure $\mu_g$ (see section 1.3 in \cite{pennec_probabilities_2004}). The volume of $\M$ is defined as $Vol(\M)=\int_\M 1 d\mu_g.$

\begin{defn}
Let $(\Omega,{\mathcal F},P)$ be a probability space and $(\M, g)$ be a Riemannian manifold. A random variable\footnote{Referred to as a random primitive in \cite{pennec_probabilities_2004}.}  $\mathbf{x}$ taking values in $\M$ is a measurable function from $\Omega$ to $\M.$
Furthermore, we say that $\mathbf{x}$ has a probability density function (p.d.f.) $p_{\mathbf{x}}$ (real, positive, and integrable function) if:
\begin{center}
$\forall {\mathcal X}\in {\mathcal L_\M} \hspace{0.3cm} P(\mathbf{x} \in {\mathcal X} )=\int_{\mathcal X}p_{\mathbf x}d\mu_g$, \hspace{0.5cm} and \hspace{0.5cm} $P(\mathcal M)=1.$
\end{center}
\end{defn}
We would like to highlight that the density function $p_\x$ is intrinsic to the manifold. If $x'=\pi(x)$ is a chart of the manifold defined almost everywhere, we obtain a random vector $\x'=\pi(\x)$. The expression of $p_\x$ in this parametrization is 
\[
p_{\x'}(x')=p_\x(\pi^{-1}(x')).
\]
Let $f:\M\rightarrow \R$ be a real function on $\M$. We define the expectation of $f$ under $\x$ as
\[
\mathbb{E}[f(\x)]=\int_{x}f(x)p_{\x}(x)d{\mu_g}
\]
We have to be careful when computing $\mathbb{E}[f(\x)]$ so that we do it independently of the parametrization. We have to use the fact that $\int_{x}f(x)p_{\x}(x)d{\mu_g}=\int_{x'}f(\pi^{-1}(x'))p_{\x'}(x')\sqrt{\mid G(x')\mid} dx', $
where $G(x')$ is the Fisher matrix at $x'$ in the parametrization $\pi$. Hence, 
\[
\mathbb{E}[f(\x)] =  \int_{x'}f(\pi^{-1}(x'))\rho_{\x'}(x') dx'.
\]
where $\rho_{\x'}(x') = p_{\x'}(x')\sqrt{\mid G(x')\mid} = p_\x(\pi^{-1}(x')) \sqrt{\mid G(x')\mid}$ is the expression of $p_\x$ in the parametrization for integration purposes, that is, its expression with respect to the Lebesgue measure $dx'$ instead of $d\mu_g.$ 

We note that $\rho_{\x'}$ depends on the chart used whereas $p_\x$ is intrinsic to the manifold.

\subsection{Formalizing remoteness and closeness}

Intuitively, the objective of this section is to create a probability distribution over pairs of probability distributions that assigns higher probability to those pairs of probability distributions which are ``close''.

We assume that we measure how distant are two points in $\M$ by means of a \emph{remoteness function} $r:\M\times \M\rightarrow \mathbb{R}$, such that  $r(x,y) \geq 0$ for each $x,y \in \M$.  Note that $r$ does not need to be transitive, symmetric or reflexive. 

As can be seen in \cref{sec:TotalOrder}, $r$ induces a total order $\leq_r$ in $\M \times \M.$ We say that two remoteness functions $r,s$ are \emph{order-equivalent} if $\leq_r=\leq_s$.

\begin{prop}\label{prop:RemotenessEquivalence}
Let $\linkStrength,\beta \in \R, \linkStrength,\beta>0$. Then, $\linkStrength\cdot r +\beta$ is order-equivalent to $r.$
\end{prop}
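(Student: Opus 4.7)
The plan is to unpack the definition of $\leq_r$ given in the referenced \cref{sec:TotalOrder} and then verify that applying the affine transformation $t \mapsto \linkStrength t + \beta$ to the real-valued ``scores'' produced by $r$ preserves the underlying total order. Since $\leq_r$ on $\M \times \M$ is built from the ordering of real numbers $r(x,y)$ (with some deterministic tie-breaking rule that depends only on the pair of points, not on $r$'s numerical values), it suffices to show that for all $a,b\in \R$ with $a,b \geq 0$ we have $a \leq b$ if and only if $\linkStrength a + \beta \leq \linkStrength b + \beta$, and likewise with equality.

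First I would observe that $\linkStrength > 0$ implies the map $\phi(t)=\linkStrength t$ is strictly increasing on $\R$, so $a\leq b \iff \linkStrength a \leq \linkStrength b$ and $a = b \iff \linkStrength a = \linkStrength b$. Next, the translation $t \mapsto t + \beta$ is an order isomorphism of $\R$ for every $\beta$, so $\linkStrength a \leq \linkStrength b \iff \linkStrength a + \beta \leq \linkStrength b + \beta$, and analogously for equality. Composing, $r(x_1,y_1) \lessgtr r(x_2,y_2) \iff (\linkStrength r + \beta)(x_1,y_1) \lessgtr (\linkStrength r + \beta)(x_2,y_2)$, and the same holds with equality. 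Consequently, the ties under $r$ coincide with the ties under $\linkStrength r + \beta$, so whatever tie-breaking convention is used in \cref{sec:TotalOrder} yields the same refinement in both cases.

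Putting these pieces together, for any two pairs $(x_1,y_1),(x_2,y_2) \in \M\times\M$, the relation $(x_1,y_1) \leq_r (x_2,y_2)$ holds if and only if $(x_1,y_1) \leq_{\linkStrength r+\beta} (x_2,y_2)$, which is exactly the statement $\leq_r = \leq_{\linkStrength r+\beta}$.

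There is no real obstacle here; the only mildly subtle point is to make sure that the tie-breaking mechanism of \cref{sec:TotalOrder} genuinely depends only on the pair of points (or at most on the equivalence class ``$r$-value''), so that it is invariant under the bijective affine rescaling of values. Assuming the construction in \cref{sec:TotalOrder} is of this form (as suggested by the fact that $\leq_r$ is described as ``induced'' by $r$), the argument above is complete.
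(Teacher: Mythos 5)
Your argument is correct and is essentially the paper's own proof: the chain of equivalences $r(a)\leq r(b) \iff \linkStrength r(a) \leq \linkStrength r(b) \iff \linkStrength r(a)+\beta \leq \linkStrength r(b)+\beta$ is exactly what appears there. Your worry about a tie-breaking rule is moot, since the definition in \cref{sec:TotalOrder} is simply $a \leq_f b$ iff $f(a)\leq f(b)$ with no tie-breaking, so ties are related in both directions and the equivalence-of-ties part of your argument already covers them.
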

\begin{proof}
$a \leq_r b$  iff $r(a) \leq r(b)$ iff $\linkStrength\cdot  r(a) \leq \linkStrength \cdot r(b)$ iff $\linkStrength \cdot r(a)+\beta \leq \linkStrength \cdot r(b)+\beta$ iff $a \leq_{\linkStrength \cdot  r+\beta} b$.
\end{proof}

We say that a probability density function $p:\M\times\M\rightarrow \R$ \emph{implements} a remoteness function $r$ if $\geq_p=\leq_r.$ This is equivalent to stating that 
for each $x,y,z,t\in \M$ we have that $p(x,y) \geq p(z,t)$ iff $r(x,y) \leq r(z,t)$. That is, a density function implements a remoteness function $r$ if it assigns higher probability density to those pairs of points which are closer according to $r$.

Once we have clarified what it means for a probability to implement a remoteness function, we introduce a specific way of creating probabilities that to that. 

\begin{defn}
Let $f_r:\M\times\M\rightarrow \mathbb{R}$ be  $f_r(x,y)=\exp(-r(x,y))$. If $Z_r = \int_{\M}\int_\M f_r d\mu_g d\mu_g$ is finite, we define 
the density function 
\begin{equation}
p_r(x,y)=\frac{f_r(x,y)}{Z_r}= \frac{\exp(-r(x,y))}{Z_r} \label{eq:ClosenessIntrinsic}
\end{equation}. 
We refer to the corresponding probability distribution as a closeness distribution.
\end{defn}
Note that $p_r$ is defined intrinsically. Following the explanation in the previous section, let $\pi$ be a chart of $\M$ defined almost everywhere. The representation of this pdf in the parametrization $x',y'=(\pi(x),\pi(y))$ is simply
\begin{equation}
p_{r'}(x',y')=p_r(\pi^{-1}(x'),\pi^{-1}(y'))\label{eq:ClosenessInParametrization}
\end{equation} and its representation for integration purposes is 
\begin{equation}
\rho_{r'}(x',y')=p_r(\pi^{-1}(x'),\pi^{-1}(y'))\sqrt{|G(x')|}\sqrt{|G(y')|} \label{eq:ClosenessInParametrizationForIntegration}
\end{equation}

\begin{prop}
It it exists, $p_r$ implements $r$.
\end{prop}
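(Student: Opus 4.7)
The plan is to unfold the definition of $p_r$ and reduce the claim to the monotonicity of the exponential function. Specifically, I would start from the defining equation $p_r(x,y)=\exp(-r(x,y))/Z_r$ and show the chain of equivalences
\[
p_r(x,y)\geq p_r(z,t) \iff \exp(-r(x,y)) \geq \exp(-r(z,t)) \iff r(x,y)\leq r(z,t),
\]
for arbitrary $x,y,z,t\in\M$. By the definition of $\geq_p$ and $\leq_r$ given just before the proposition, this chain is precisely the statement that $p_r$ implements $r$.

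The first equivalence requires only that $Z_r$ is a strictly positive real number, which follows because $f_r=\exp(-r)$ is everywhere positive on $\M\times \M$, so its integral with respect to the (positive) volume measure induced by $g$ is a strictly positive real; the hypothesis of the definition guarantees it is also finite, hence we may divide by $Z_r$ without flipping the inequality. The second equivalence is just the strict monotonicity of $t\mapsto \exp(-t)$ on $\R$: it reverses the order, turning $\exp(-a)\geq \exp(-b)$ into $a\leq b$.

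There is no substantive obstacle here; the only care needed is to note that $p_r$ is well-defined as a genuine density (which is exactly what the finiteness assumption on $Z_r$ in the definition ensures), and that the comparisons are done intrinsically on $\M\times\M$ rather than in a chart, so that $p_r$, not $\rho_{r'}$, is the object being compared. In particular, the Jacobian-like factors $\sqrt{|G(x')|}\sqrt{|G(y')|}$ appearing in equation~\eqref{eq:ClosenessInParametrizationForIntegration} play no role: they would if we tried to phrase the implementation property in terms of $\rho_{r'}$, but the definition of implementation is in terms of the intrinsic density $p_r$.
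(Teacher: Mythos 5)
Your argument is correct and is essentially the paper's own proof, which simply observes that the exponential is monotonous and the minus sign reverses the order. You spell out the two additional details the paper leaves implicit---that $Z_r>0$ so division preserves the inequality, and that the comparison is made with the intrinsic density $p_r$ rather than its chart representation---both of which are accurate and harmless elaborations.
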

\begin{proof} The exponential is a monotonous function and the minus sign in the exponent is used to revert the order.
\end{proof}

\begin{prop}
\label{prop:Z_r_finite}
If $r$ is measurable and $\M$ has finite volume, then $Z_r$ is finite, and hence $p_r$ implements $r$.
\end{prop}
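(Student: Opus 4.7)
The plan is to exploit the two hypotheses very directly: non-negativity of $r$ (which is part of the definition of a remoteness function) gives a uniform upper bound on the integrand, and finite volume of $\M$ bounds the measure of the integration domain. Combined with measurability, these are exactly the ingredients needed to conclude via a trivial domination argument.

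First I would observe that, since $r(x,y)\geq 0$ everywhere on $\M\times\M$ by the definition of a remoteness function, the map $f_r(x,y)=\exp(-r(x,y))$ satisfies $0 \leq f_r(x,y) \leq 1$ pointwise. Measurability of $f_r$ as a function on $\M\times\M$ (with the product Lebesgue $\sigma$-algebra) follows from the assumed measurability of $r$ composed with the continuous exponential map, so the double integral defining $Z_r$ makes sense.

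Next I would use the product measure $\mu_g\otimes \mu_g$ on $\M\times \M$, whose total mass is $Vol(\M)^2$, and dominate:
\begin{equation}
Z_r \;=\; \int_{\M}\!\int_{\M} \exp(-r(x,y))\, d\mu_g(x)\, d\mu_g(y) \;\leq\; \int_{\M}\!\int_{\M} 1 \, d\mu_g(x)\, d\mu_g(y) \;=\; Vol(\M)^2.
\end{equation}
Since $Vol(\M)$ is assumed finite, $Z_r$ is finite. The appeal to Tonelli is legitimate because the integrand is non-negative and measurable on the product space. Finiteness of $Z_r$ is precisely the hypothesis of the definition of $p_r$, so $p_r$ is a well-defined probability density function, and by the previous proposition it implements $r$.

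I do not anticipate any real obstacle: the only subtlety is making sure the double integral is set up with respect to the product of the Riemannian measures rather than a chart-dependent Lebesgue measure, so that the bound $Vol(\M)^2$ is intrinsic and matches the definition of $Z_r$ in equation (\ref{eq:ClosenessIntrinsic}).
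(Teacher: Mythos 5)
Your argument is correct and follows essentially the same route as the paper's proof: bound $f_r\leq 1$ via non-negativity of $r$, obtain measurability of $f_r$ as a composition of measurable functions, and conclude integrability from the finite volume of $\M$ (your explicit bound $Z_r\leq Vol(\M)^2$ just makes the paper's ``bounded measurable function on a finite volume space is integrable'' step quantitative).
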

\begin{proof}
Note that since $r(x,y) \geq 0$, we have that $f_r(x,y) \leq 1$, and hence $f_r$ is bounded. Furthermore, $f_r$ is measurable since it is a composition of measurable functions. Now, since any bounded measurable function in a finite volume space is integrable, $Z_r$ is finite.  
\end{proof}

Obviously, once we have established a closeness distribution $p_r$ we can define its marginal and conditional distributions in the usual way. We note $p_r(x)$ (resp. $p_r(y)$) as the marginal over $x$ (resp. $y$). We note $p_r(x|y)$ (resp. $p_r(y|x)$) as the conditional density of $x$ given $y$ (resp. $y$ given $x$).

\section{\label{sec:KL-closeness-multinomials}KL-closeness distributions for multinomials}

In this section we study closeness distributions on $M_n$ (the family of multinomial distributions of dimension $n$, or the family of finite discrete distributions over $n+1$ atoms). To do that, first we need to establish the remoteness function. It is well known that there is an isometry between $M_n$ and the positive orthant of the $n$ dimensional sphere ($S_n$) of radius 2 (see section 7.4.2. in \cite{kass_geometrical_1997}). This isometry allows us to compute the volume of the manifold as the area of the sphere of radius 2 on the positive orthant.
\begin{prop}
\label{prop:Vol_M_n}
The volume of $M_n$ is $Vol(M_n)=\frac{\pi^\frac{n+1}{2}}{\Gamma(\frac{n+1}{2})}$ 
\end{prop}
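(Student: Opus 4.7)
The plan is to transport the volume computation from $M_n$ to a subset of Euclidean space via the isometry cited from \cite{kass_geometrical_1997}, where the answer is a classical sphere-area calculation.

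First I would make the isometry explicit: the map $(p_0,\dots,p_n)\mapsto(2\sqrt{p_0},\dots,2\sqrt{p_n})$ sends the probability simplex (with the Fisher metric) to the intersection of the sphere of radius $2$ in $\R^{n+1}$ with the positive orthant $\{z_i\ge 0\}$. Since an isometry preserves the induced Riemannian volume, $Vol(M_n)$ equals the $n$-dimensional surface area of this spherical ``octant.''

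Next I would invoke the standard formula for the full surface area of an $n$-sphere of radius $R$ in $\R^{n+1}$, namely $\frac{2\pi^{(n+1)/2}}{\Gamma((n+1)/2)}R^n$, and specialize to $R=2$, obtaining $\frac{2^{n+1}\pi^{(n+1)/2}}{\Gamma((n+1)/2)}$. The sphere decomposes into $2^{n+1}$ congruent pieces, one for each sign pattern of the coordinates; these are permuted by the reflection group and hence have equal measure. The positive orthant corresponds to exactly one of these pieces, so
\begin{equation*}
Vol(M_n)=\frac{1}{2^{n+1}}\cdot\frac{2^{n+1}\pi^{(n+1)/2}}{\Gamma((n+1)/2)}=\frac{\pi^{(n+1)/2}}{\Gamma((n+1)/2)}.
\end{equation*}

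There is no real obstacle here; the only point to be careful about is consistency of conventions. In particular I would double-check that the metric used in \cite{kass_geometrical_1997} coincides with the Fisher information metric underlying $\mu_g$ as defined earlier, so that ``isometry'' really implies equality of the volume measures, and that the factor of $2$ in the radius is not absorbed elsewhere. Once those conventions are pinned down, the computation is immediate.
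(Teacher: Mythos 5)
Your proposal is correct and follows essentially the same route as the paper: the area formula for the $n$-sphere of radius $2$, divided by the $2^{n+1}$ congruent orthant pieces. The only difference is that you spell out the isometry $p\mapsto 2\sqrt{p}$ explicitly, which the paper leaves to the cited reference.
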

\begin{proof}
The area of a sphere $S_n$ of radius $r$ is $A_{n,r} = \frac{2\pi^\frac{n+1}{2}r^n}{\Gamma(\frac{n+1}{2})}.$ Taking $r=2$, $A_{n,2} = \frac{\pi^\frac{n+1}{2} 2^{n+1}}{\Gamma(\frac{n+1}{2})}.$ Now, there are $2^{n+1}$ orthants, so the positive orthant amounts for $\frac{1}{2^{n+1}}$ of that area, as stated.
\end{proof}
\begin{figure}
    \centering
    \includegraphics[width=5cm]{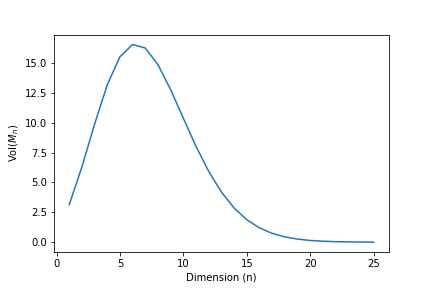}
    \caption{Volume of the family of multinomial distributions as dimension increases}
    \label{fig:Vol_M_n}
\end{figure}
\cref{fig:Vol_M_n} shows that the volume reaches its maximum at $n=7$. The main takeover of \cref{prop:Vol_M_n} is that the volume of $M_n$ is finite, because this allows us to prove the following result:
\begin{prop}
For any measurable remoteness function $r$ on $M_n$ there is a closeness distribution $p_r$ implementing it. 
\end{prop}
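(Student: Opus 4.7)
The statement follows almost immediately by chaining the two preceding propositions, so the plan is essentially to check that the hypotheses of \cref{prop:Z_r_finite} are met when the underlying manifold is $M_n$. First I would invoke \cref{prop:Vol_M_n} to observe that $Vol(M_n) = \pi^{(n+1)/2}/\Gamma((n+1)/2)$ is finite for every $n$, which settles the finite-volume hypothesis. The measurability hypothesis on $r$ is given in the statement itself, so nothing needs to be done there.

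Next I would apply \cref{prop:Z_r_finite} directly: since $r$ is measurable and $M_n$ has finite volume, the normalizing constant $Z_r = \int_{M_n}\int_{M_n} \exp(-r(x,y))\, d\mu_g\, d\mu_g$ is finite. This is exactly the condition required in the definition of a closeness distribution for $p_r$ to exist, so the density $p_r(x,y) = \exp(-r(x,y))/Z_r$ is well defined. The earlier proposition guaranteeing that, whenever it exists, $p_r$ implements $r$ (by monotonicity of $t \mapsto e^{-t}$) then gives the conclusion.

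There is no real obstacle here; the only thing to verify is that \cref{prop:Vol_M_n} genuinely supplies the finite-volume hypothesis in the intrinsic Riemannian sense used by \cref{prop:Z_r_finite}. This is immediate from the isometry between $M_n$ (equipped with the Fisher information metric) and the positive orthant of the sphere of radius $2$ that was used to prove \cref{prop:Vol_M_n} in the first place, since that isometry preserves the induced Riemannian volume measure $\mu_g$. So the proof reduces to a one-line citation of the two previous propositions.
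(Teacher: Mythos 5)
Your argument is correct and is exactly the paper's: the proof there simply cites \cref{prop:Z_r_finite} together with the finite volume of $M_n$ from \cref{prop:Vol_M_n}. Your additional remark that the sphere isometry preserves the Riemannian volume measure is a harmless elaboration of the same point.
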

\begin{proof}
Directly from \cref{prop:Z_r_finite} and the fact that $M_n$ has finite volume.
\end{proof}

A reasonable choice of remoteness function for a statistical manifold is the Kullback-Leibler (KL) divergence. The next section analyzes the closeness distributions that implement KL in $\M_n$.

\subsection{Closeness distributions for KL as remoteness function}

Let $\theta \in \M_n.$ Thus, $\theta$ is a discrete distribution over $n+1$ atoms. We write $\theta_i$ to represent $p(x=i|\theta).$  Note that each $\theta_i$ is independent of the parametrization an thus it is an intrinsic quantity of the distribution.

Let $\theta,\eta \in \M_n$. The KL divergence between $\theta$ and $\eta$ is 
\[
D(\mu,\theta) = \sum_{i=1}^{n+1} \mu_i \log \frac{\mu_i}{\theta_i}.
\]

We want to study the closeness distributions that implement KL in $\M_n$. The detailed derivation of these results can be found in \cref{sec:MultinomialKLClosenessDetails}.
The closeness pdf according to \cref{eq:ClosenessIntrinsic} is 

\begin{align*}
p_D(\mu,\theta) = \frac{1}{Z_D} \prod_{i=1}^{n+1}{{\theta_i}^{\mu_i}}
        \prod_{i=1}^{n+1}{{\mu_i}^{-\mu_i}}
\end{align*}

The marginal for $\mu$ is 
\begin{align*}
p_D(\mu) =  \frac{1}{Z_D} \prod_{i=1}^{n+1}{\mu_i}^{-\mu_i} B(\mu+\frac{1}{2})
\end{align*}
where $B(\alpha)=\frac{\prod_{i=1}^k \Gamma(\alpha_i)}{\Gamma(\sum_{i=1}^k \alpha_i)}$ is the multivariate Beta function. 

And the conditional for $\theta$ given $\mu$:
\begin{align}
p_D(\theta\mid\mu) = \frac{ 
                \prod_{i=1}^{n+1}{\theta_i}^{\mu_i}
            }
            {
                B(\mu+\frac{1}{2})
            }
    \label{eq:ConditionalWithoutTightness}
\end{align}

\Cref{eq:ConditionalWithoutTightness} is very similar to the expression of a Dirichlet distribution. In fact, the expression of $p_D(\theta\mid\mu)$ for integration purposes in the expectation parameterization , namely $\rho_D(\theta\mid\mu)$, is that of a Dirichlet distribution:

\begin{align}
\rho_D(\theta\mid\mu) =  Dirichlet(\theta;\mu+\frac{1}{2})  \label{eq:ConditionalRhoWithoutTightness}
\end{align}

\Cref{eq:ConditionalRhoWithoutTightness} deserves some attention. We have defined the joint density $p_D(\mu,\theta)$ so that pairs of distributions $(\mu,\theta)$ that are close in terms of KL divergence are assigned a higher probability than pairs of distributions $(\mu^*,\theta^*)$ which are further away in terms of KL. Hence, the conditional $p_D(\theta\mid\mu)$ assigns a larger probability to those distributions $\theta$ which are close in terms of KL to $\mu.$ This means that whenever we have a probabilistic model which encodes two multinomial distributions $\theta$ and $\mu$, and we are interested in introducing that $\theta$ should be close to $\mu$, we can introduce the assumption that $\theta \sim Dirichlet(\mu+\frac{1}{2}).$ 

Interesting as it is for modeling purposes, the use of  \cref{eq:ConditionalRhoWithoutTightness} however does not allow the modeler to convey information regarding the strength of the link. That is, $\theta$'s in the KL-surrounding of $\mu$ will be more probable, but there is no way to establish how much more probable. We know by \cref{prop:RemotenessEquivalence} that for any remoteness function $r$, we can select $\linkStrength,\beta>0$, and $\linkStrength\cdot r+\beta$ is order-equivalent to $r$. We can take advantage of that fact and use $\linkStrength$ to encode the strength of the probabilistic link between $\theta$ and $\mu$. If instead of using the KL ($D$) as remoteness function, we opt for $\linkStrength \cdot D$, following a parallel development to the one above we will find that  
\begin{equation}
    \rho_{\linkStrength \cdot D}(\theta\mid\mu) = Dirichlet(\theta;\linkStrength\mu+\frac{1}{2}).\label{eq:ConditionalWithTightness}
\end{equation}

Now, \cref{eq:ConditionalWithTightness} allows the modeler to fix a large value of $\linkStrength$ to encode that it is extremely unlikely that  $\theta$ separates from $\mu$, or a value of $\linkStrength$ close to $0$ to encode that the link between $\theta$ and $\mu$ is highly loose. Furthermore it is important to realize that \cref{eq:ConditionalWithTightness} allow us to interpret any already existing model which incorporates Dirichlet (or Beta) distributions with the only requirement that each of its concentration parameters is larger than $\frac{1}{2}.$ Say we have a model in which $\theta \sim Dirichlet(\alpha)$. 
Then, defining $\mu$ by coordinates as $\mu_i=\frac{\alpha_i-\frac{1}{2}}{-\frac{n+1}{2}+\sum_{i=1}^{n+1}\alpha_i}$, we can interpret the model as imposing $\theta$ to be close to $\mu$ with intensity $\linkStrength=\frac{\alpha_1-\frac{1}{2}}{\mu_1}.$ Note that, extending this interpretation a bit to the extreme, since the strength of the link reduces as $\linkStrength\rightarrow 0$, a "free" Dirichlet will have all of its weights set to $\frac{1}{2}.$ This coincides with the classical prior suggested by Jeffreys \cite{jeffreys_invariant_1946,jeffreys_theory_1998} for this very same problem. This is reasonable since Jeffreys' prior was constructed to be independent of the parametrization, that is, to be intrinsical to the manifold, similarly to what we are doing.

\subsection{Visualizing the distributions}
In the previous section we have seen provided an expression for $p_{\linkStrength \cdot D}(\theta\mid\mu).$ Since the KL divergence is not symmetric, we have that $p_{\linkStrength \cdot D}(\mu\mid\theta)$ is different from $p_{\linkStrength \cdot D}(\theta\mid\mu).$ Unfortunately, we have not been able to provide a closed form expression for $p_{\linkStrength \cdot D}(\mu\mid\theta).$ However, it is possible to compute it numerically in order to compare both conditionals.  
\begin{figure}
    \centering
    \includegraphics[scale=0.3]{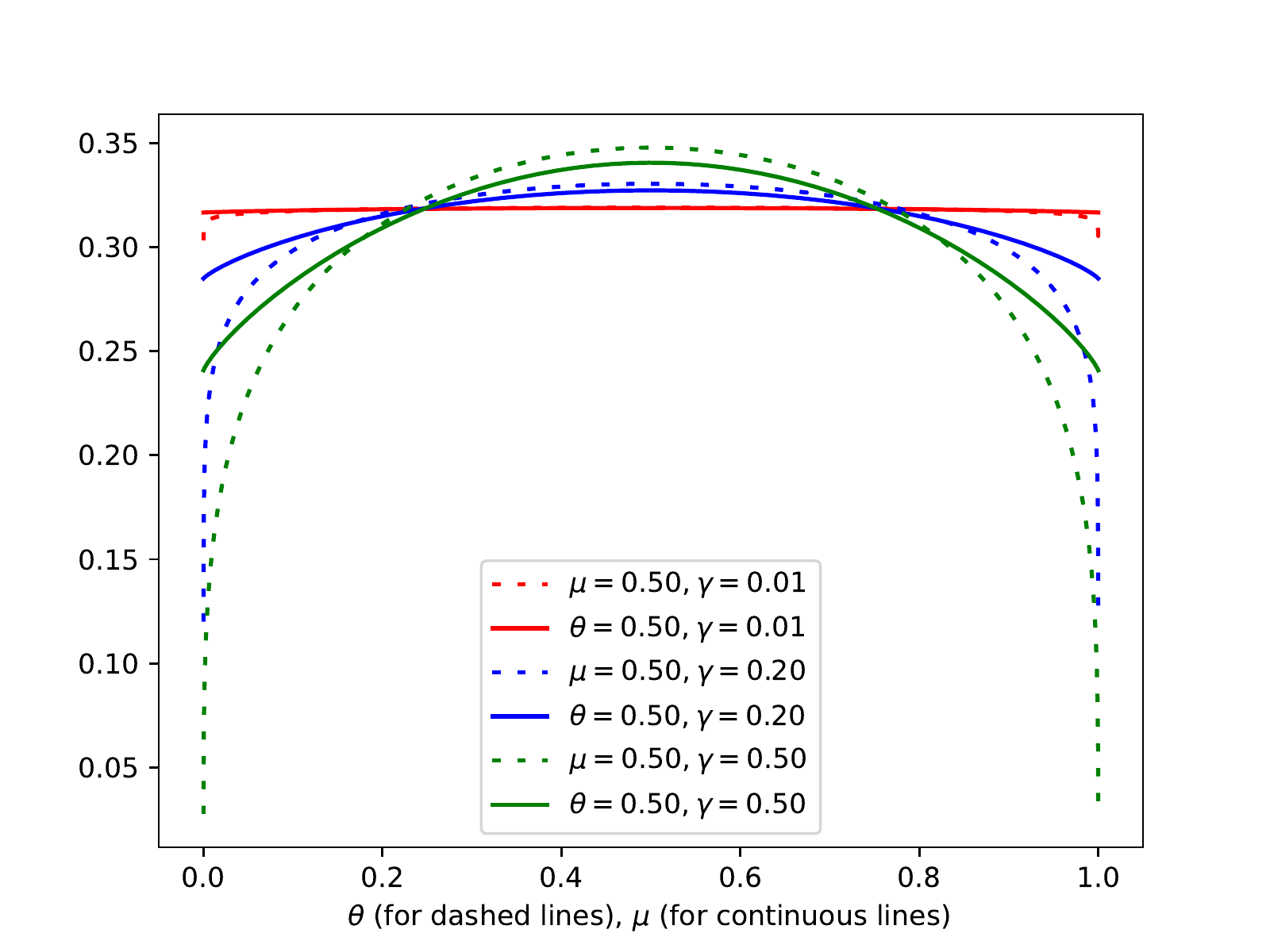}
    \includegraphics[scale=0.3]{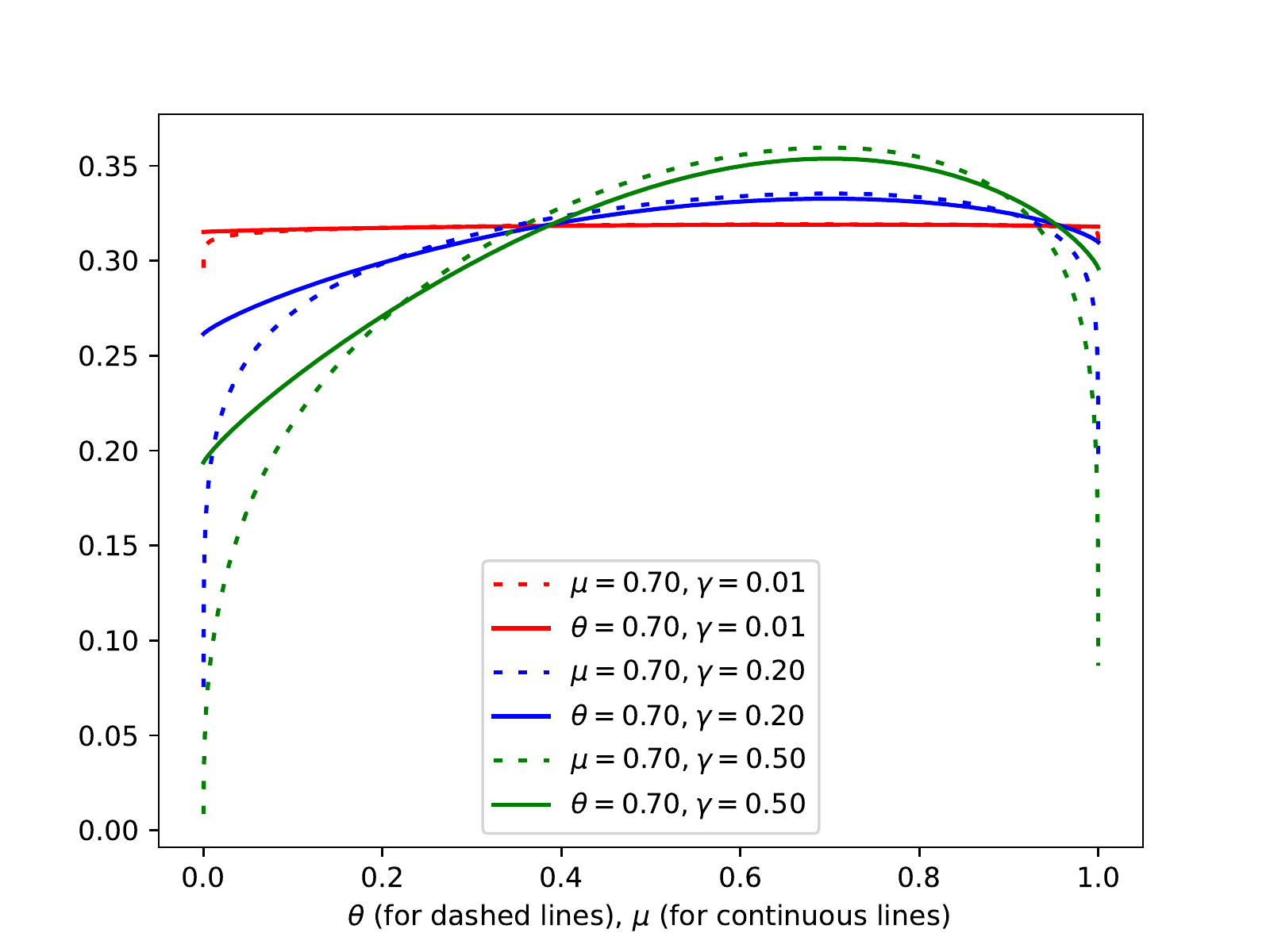}
    \includegraphics[scale=0.3]{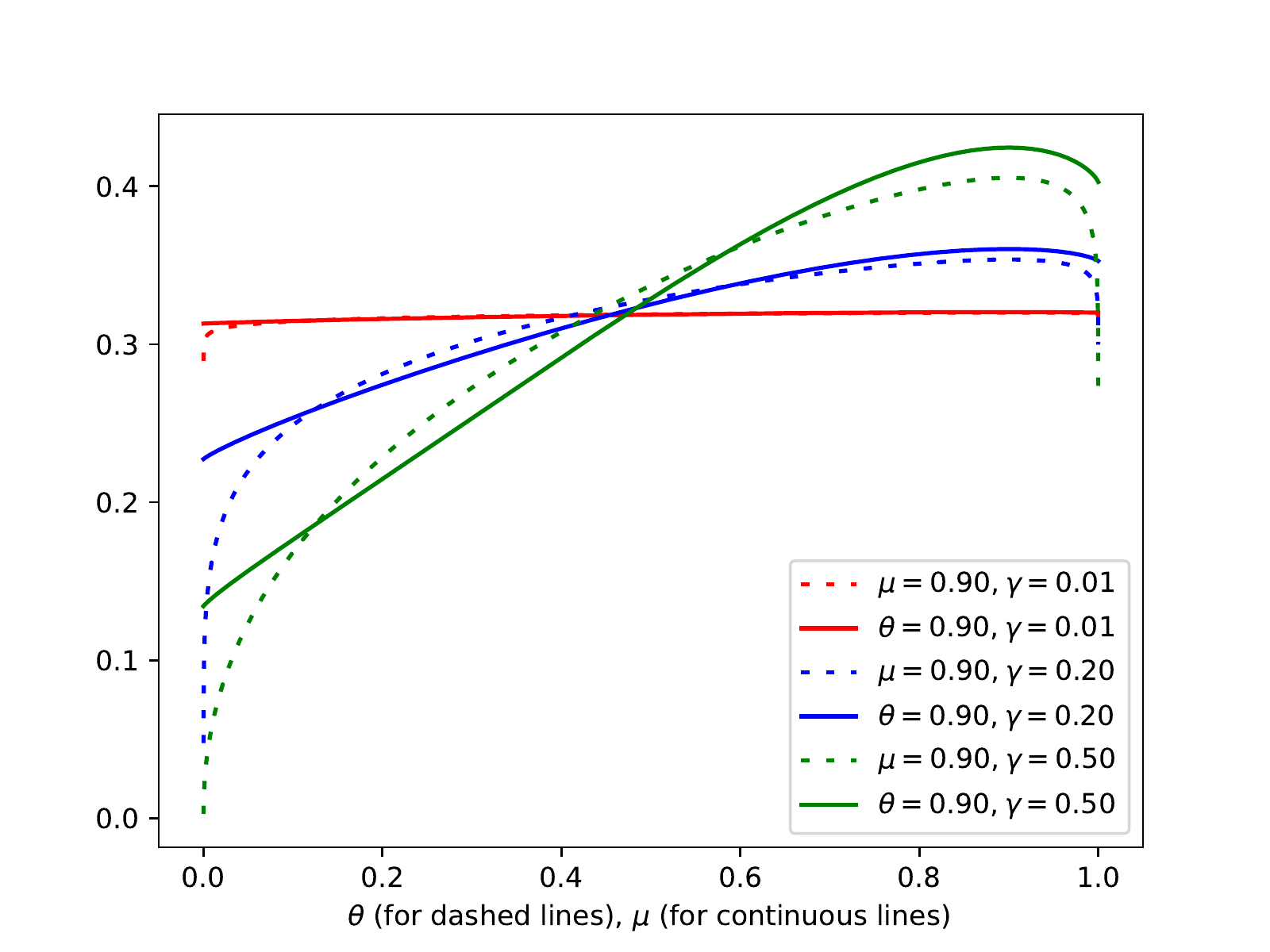}
    \includegraphics[scale=0.3]{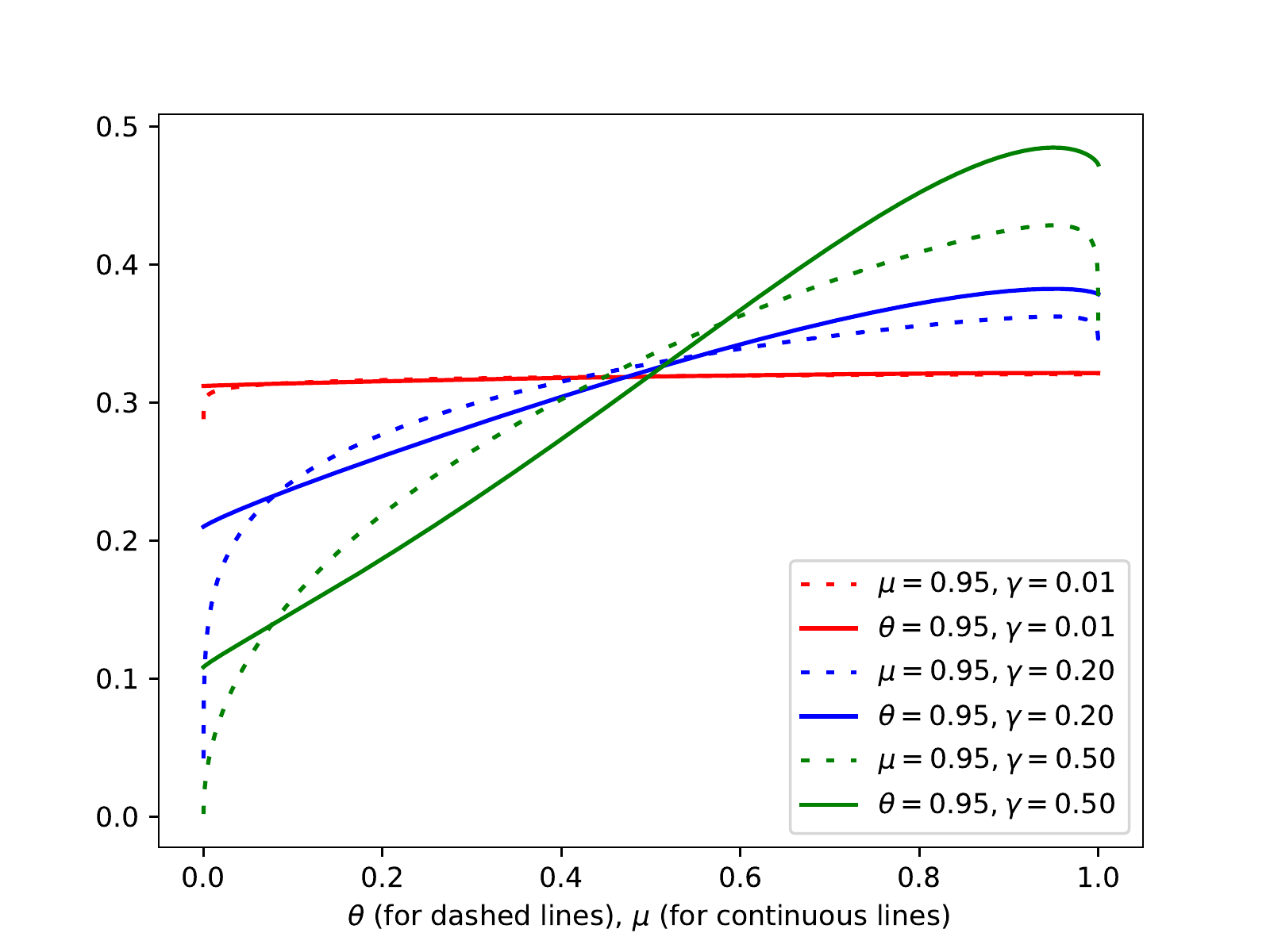}
    \caption{Comparison of $p_{\linkStrength \cdot D}(\theta\mid\mu)$ and $p_{\linkStrength \cdot D}(\theta\mid\mu).$}
    \label{fig:ConditionalComparison}
\end{figure}

\Cref{fig:ConditionalComparison} shows a comparison\footnote{According to what is suggested in \cite{cerquides2021parametrization}, for a proper interpretation of the densities, we show its density function, which is intrinsic to the manifold, instead of its expression in the parametrization, as is commonly done.}  of  $p_{\linkStrength \cdot D}(\mu\mid\theta)$ and $p_{\linkStrength \cdot D}(\theta\mid\mu).$ Note that from \cref{eq:ConditionalWithTightness}, the value of $p_{\linkStrength \cdot D}(\theta\mid\mu)$ is $0$ at $\theta=0$ and $\theta=1.$  In \cref{fig:ConditionalComparison}, we can see that this is not the case for $p_{\linkStrength \cdot D}(\mu\mid\theta)$ neither at $\mu=0$ nor at $\mu=1$. In fact we see that $p_{\linkStrength \cdot D}(\theta\mid\mu)$ always starts below $p_{\linkStrength \cdot D}(\mu\mid\theta)$ at $\theta=0$ (resp. $\mu=0)$. Then, as $\theta$ (resp. $\mu$) grows, it is always the case that  $p_{\linkStrength \cdot D}(\theta\mid\mu)$ goes over $p_{\linkStrength \cdot D}(\mu\mid\theta)$, to end decreasing again below it when $\theta$ (resp $\mu$) approaches to 1.

\section{\label{sec:Beta-Binomial-reinterpretation}Reinterpreting the Beta-Binomial model}

\begin{figure}
\begin{center}
\usetikzlibrary{bayesnet}
\usetikzlibrary{arrows,positioning} 
\begin{tikzpicture}[>=stealth']

  \node[latent]                            (mu)   {$\mu$};
  \node[latent, below=0.3cm of mu]                (gamma)   {$\linkStrength$};
  \node[latent, below right=0.1cm and 0.6cm of mu]            (theta_i) {$\theta_i$};
  \node[obs,    right=of theta_i]             (y_i)     {$y_i$};
  
  \factor[left=0.8cm of mu] {f_mu} {above:$Beta(\frac{1}{2},\frac{1}{2})$} {} {mu};
  \factor[left=0.8cm of gamma] {f_gamma} {above:$Gamma(1,0.1)$} {} {gamma};
  \edge {mu,gamma} {theta_i} ; %
  \edge {theta_i} {y_i} ; %

  \plate {pl} {(theta_i)(y_i)} {$i\in [1:71]$} ;
  
\end{tikzpicture}
\end{center}
\caption{\label{figure:HierarchicalMu}Reinterpreted hierarchical graphical model for the rodents example.}
\end{figure}
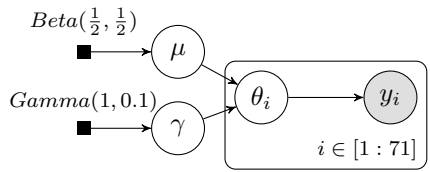
We are now ready to go back to the rodents example provided in the introduction. The main idea we would like this hierarchical model to capture is that the $\theta_i$'s are somewhat similar. We do this by introducing a new random variable $\mu$ to which we would like each $\theta_i$ to be close to (see \cref{figure:HierarchicalMu}). Furthermore, we introduce another variable $\linkStrength$ that controls how tightly coupled the $\theta_i$'s are to $\mu$. Now, $\mu$ represents a proportion and priors for proportions have been well studied, including the "Bayes-Laplace rule" \cite{laplace_essai_1814} which recommends $Beta(1,1)$, the Haldane prior \cite{haldane_note_1932} which is an improper prior $\lim_{\alpha\rightarrow 0^+}Beta(\alpha,\alpha)$, and the Jeffreys' prior \cite{jeffreys_invariant_1946,jeffreys_theory_1998} $Beta(\frac{1}{2},\frac{1}{2}).$ Following the arguments in the previous section, here we stick with the Jeffreys' prior. A more difficult problem is the selection of the prior for $\linkStrength$, where we still do not have a well founded choice. Note that taking a look at \cref{eq:ConditionalWithTightness}, $\linkStrength$'s role acts similarly (although not exactly equal) to an equivalent sample size. Thus, the prior over $\linkStrength$ could be thought as a prior over the equivalent sample size with which $\mu$  will be incorporated as prior into the determination of each of the $\theta_i$'s. In case the size of each sample ($n_i$) is large, there will be no much difference between a hierarchical model and modeling each of the 71 experiments as independent experiments. So, it makes sense for the prior over $\linkStrength$ to concentrate on relatively small equivalent sample sizes. Following this line of thought we propose $\linkStrength$ to follow a $Gamma(\alpha=1,\beta=0.1).$ 

To summarize, the hierarchical model we obtain based on divergence probability distributions is:
\begin{align}
    \mu & \sim Beta(\frac{1}{2},\frac{1}{2}) \\ 
    \linkStrength & \sim Gamma(1,0.1)\\
    \theta_i &\sim Beta(\linkStrength\mu+\frac{1}{2},\linkStrength(1-\mu)+\frac{1}{2})  &\forall i\in [1:71]\\
    y_i &\sim Binomial(n_i,\theta_i)& \forall i\in [1:71].
\end{align}

\begin{figure}
    \centering
    \includegraphics[width=\textwidth]{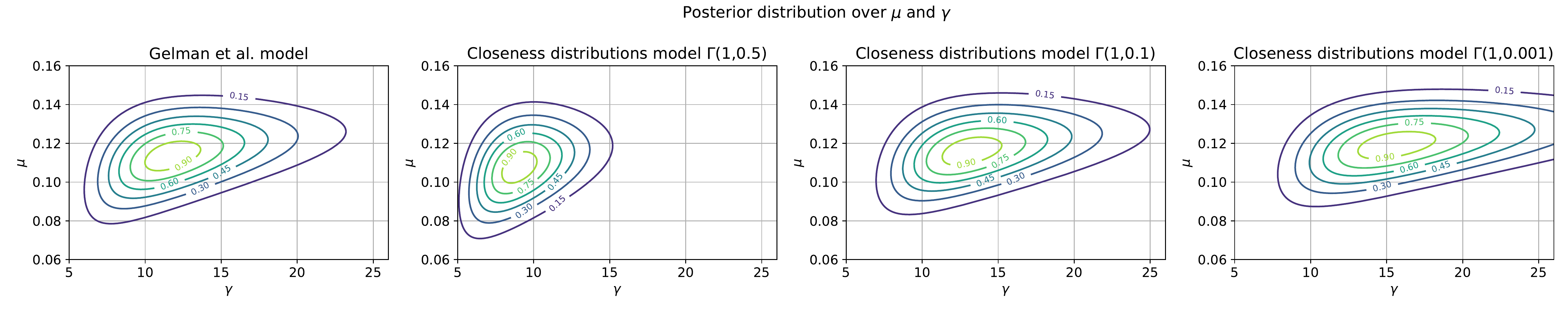}
    \caption{Comparison of posteriors between a closeness distribution model and that proposed by Gelman et al. in }
    \label{fig:BDA3PosteriorComparison}
\end{figure}
\Cref{fig:BDA3PosteriorComparison} shows that the posteriors generated by both models are similar, and put the parameter $\mu$ (the pooled average) between 0.08 and 0.15 and the parameter $\gamma$ (the intensity of the link between $\mu$ and each of the $\theta_i$'s) between 5 and 25. Furthermore, the model is relatively insensitive to the parameters of the prior for $\linkStrength$ as long as they do create a sparse prior. Thus, we see that selecting the prior as $\Gamma(1,0.5)$ creates a prior too much concentrated on low values of $\linkStrength$ (that is it imposes a relatively mild closeness link between $\mu$ and each of the $\theta_i$'s). This changes a lot the estimation. However, $\Gamma(1,0.01)$  creates a posterior similar to that of $\Gamma(1,0.1)$, despite being more spread.

\section{\label{sec:HDM-reinterpretation}Hierarchical Dirichlet Multinomial model}

Recently \cite{azzimonti_hierarchical_2017,azzimonti_hierarchical_2019,azzimonti_structure_2020}, Azzimonti et al. have proposed a hierarchical Dirichlet multinomial model to estimate conditional probability tables (CPTs) in Bayesian networks. Given two discrete finite random variables $X$ (over domain $\mathcal X$) and $Y$ over domain ($\mathcal Y$) which are part of a Bayesian network, and such that $Y$ is the only parent of $X$ in the network, the CPT for $X$ is responsible of storing $p(X|Y).$ The usual CPT model (the so called Multinomial-Dirichlet adheres to \emph{parameter independence} and stores $|{\mathcal Y}|$ different independent Dirichlet distributions over each of the $\theta_{X|y}$. Instead Azzimonti et al. propose the hierarchical Multinomial-Dirichlet model, where  ``the parameters of different conditional distributions belonging to the same CPT are drawn from a common higher-level
distribution''. 
Their model can be summarized equationally as
\begin{align*}
    \alpha & \sim  s \cdot Dirichlet (\alpha_0)\\
    \theta_{X|y} & \sim  Dirichlet(\alpha)  &\forall y \in {\mathcal Y}\\
    X_y & \sim  Categorical(\theta_{X|y}) & \forall y \in {\mathcal Y}\\
\end{align*}
and graphically as shown in \cref{figure:HierarchicalAzzimonti}. 

The fact that the Dirichlet distribution is the conditional of a closeness distribution allow us to think about this model as a generalization of the model presented for the rats example. Thus, the  Hierarchical Dirichlet Multinomial model can be understood as  introducing the assumption that there is a probability distribution with parameter $\mu$, that is close in terms of its KL divergence to each of the $y\in {\mathcal Y}$ different distributions each of them parameterized by $\theta_{X|y}.$  Thus, in equational terms, we have that the model can be rewritten as

\begin{align}
    \mu & \sim Dirichlet(\frac{1}{2},\ldots,\frac{1}{2}) \\ 
    \linkStrength & \sim Gamma(1,0.1)\\
    \theta_{X|y} &\sim Dirichlet(\linkStrength\mu+ \frac{1}{2})  &\forall y \in {\mathcal Y}\\
    X_y &\sim Categorical(\theta_{X|y}) & \forall y \in {\mathcal Y}\\
\end{align}

Note that $\linkStrength$ in our reinterpreted model plays a role quite similar to the one that $s$ played on Azzimonti's model. To maintain the parallel with the model developed for the rodents example, here we have also assumed a $Gamma(1,0.1)$ as prior over $\linkStrength$, instead of the punctual distribution assumed in \cite{azzimonti_hierarchical_2019}, but we could easily mimic their approach and specify a single value for $\linkStrength.$

\begin{figure}
\begin{center}
\usetikzlibrary{bayesnet}
\usetikzlibrary{arrows,positioning} 

\begin{tikzpicture}[>=stealth']

  \node[latent, right]            (theta_i) {$\theta_{X|y}$};
  \node[latent, left=0.6cm of theta_i]                             (a)   {$\alpha$};
  
  \node[obs,    right=of theta_i]             (y_i)     {$X_y$};
  
  \factor[left=2.5cm of theta_i]  {f} {above:$s\cdot Dirichlet(\alpha_0)$} {} {a};
  \edge {a} {theta_i} ; %
  \edge {theta_i} {y_i} ; %

  \plate {pl} {(theta_i)(y_i)} {$y\in {\mathcal Y}$} ;
  
\end{tikzpicture}
\end{center}
\caption{\label{figure:HierarchicalAzzimonti}PGM for the hierarchical Dirichlet Multinomial model proposed in \cite{azzimonti_hierarchical_2019}.}
\end{figure}
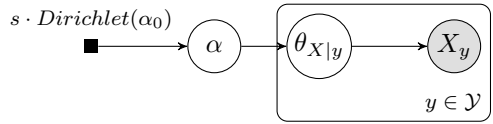

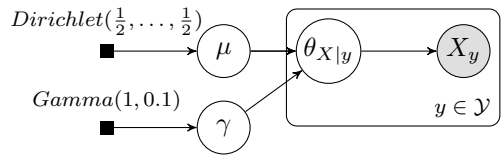
\begin{figure}
\begin{center}
\usetikzlibrary{bayesnet}
\usetikzlibrary{arrows,positioning} 

\begin{tikzpicture}[>=stealth']

  \node[latent, right]            (theta_i) {$\theta_{X|y}$};
  \node[latent, left=0.6cm of theta_i]                             (mu)   {$\mu$};
  \node[latent, below=0.3cm of mu]                (gamma)   {$\linkStrength$};
  \node[obs,    right=of theta_i]             (y_i)     {$X_y$};
  
  \factor[left=1.1cm of mu]  {f} {above:$Dirichlet(\frac{1}{2},\ldots,\frac{1}{2})$} {} {a};
  \factor[left=1.1cm of gamma]  {g} {above:$Gamma(1,0.1)$} {} {gamma};
  \edge {mu,gamma} {theta_i} ; %
  \edge {a} {theta_i} ; %
  \edge {theta_i} {y_i} ; %

  \plate {pl} {(theta_i)(y_i)} {$y\in {\mathcal Y}$} ;
  
\end{tikzpicture}
\end{center}
\caption{\label{figure:HDMReinterpreted}Reinterpreted PGM for the hierarchical Dirichlet Multinomial model }
\end{figure}

Note that we are not claiming that we are improving the Hierarchical Dirichlet Multinomial model, we are just reinterpreting it in a conceptually easier to understand way.

\section{Conclusions and future work}
We have introduced the idea of divergence distributions and we have shown that they can be a useful tool for the probabilistic model builder. We have seen that they can provide additional rationale and geometric intuitions for some commonly used hierarchical models. In this paper we have concentrated on discrete divergence distributions. The study of continuous divergence distributions remains as future work. 

\section{Acknowledgements} 
Thanks to Borja Sánchez López, Jerónimo Hernández-González and Oguz Mulayim for discussions on preliminary versions. This work was partially supported by the projects Crowd4SDG and Humane-AI-net, which have received funding from the European Union’s Horizon 2020 research and innovation program under grant agreements No 872944 and No 952026, respectively. This work was also partially supported by Grant PID2019-104156GB-I00 funded by MCIN/AEI/10.13039/501100011033.

\appendix

\section{\label{sec:TotalOrder}Total order induced by a function}

\begin{defn} Let $Z$ be a set and $f:Z\rightarrow\R$ a function. 
The binary relation $\leq_{f}$ (a subset of $Z\times Z$) is defined as 
\begin{equation}
a \leq_{f} b \text{ iff } f(a)\leq f(b)
\end{equation}
\end{defn}
\begin{prop}
$\leq_f$ is a total (or lineal) order in $Z$.
\end{prop}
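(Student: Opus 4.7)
The plan is to verify, one defining axiom at a time, that $\leq_f$ inherits each property of a total order directly from the ambient total order $\leq$ on $\R$. Every check reduces to a one-line application of the corresponding property of $\leq$ to the real numbers $f(a), f(b), f(c)$, so I would organize the proof simply as four short paragraphs, one per axiom.

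First I would handle reflexivity: for every $a\in Z$, $f(a)\leq f(a)$ holds in $\R$, so $a\leq_f a$. Next, transitivity: if $a\leq_f b$ and $b\leq_f c$, then $f(a)\leq f(b)$ and $f(b)\leq f(c)$, hence by transitivity of $\leq$ on $\R$ we obtain $f(a)\leq f(c)$, i.e., $a\leq_f c$. Then totality (comparability): for any $a,b\in Z$ the real numbers $f(a),f(b)$ satisfy $f(a)\leq f(b)$ or $f(b)\leq f(a)$, so $a\leq_f b$ or $b\leq_f a$. All three steps are immediate and essentially formal.

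The main obstacle is antisymmetry, and I would want to flag this carefully. From $a\leq_f b$ and $b\leq_f a$ one only deduces $f(a)=f(b)$, which does not force $a=b$ unless $f$ is injective. Strictly speaking, what the construction produces from an arbitrary $f$ is a \emph{total preorder} rather than a total order. To rescue the statement as written I would either (i) add the hypothesis that $f$ is injective, or (ii) interpret equality in $Z$ modulo the equivalence $a\sim b \iff f(a)=f(b)$ and view $\leq_f$ as a total order on the quotient $Z/\!\sim$. In the paper's intended use with $Z=\M\times\M$ and $f=r$ a remoteness function, option (ii) is the natural reading: two pairs of distributions with the same remoteness are not distinguished by the ordering, which is exactly the property that is actually needed for Proposition~\ref{prop:RemotenessEquivalence} and for the characterization of density functions implementing $r$. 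I would therefore present the argument as above and append a brief remark clarifying the preorder-versus-order distinction so that nothing downstream is left ambiguous.
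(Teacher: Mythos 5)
Your verification of reflexivity, transitivity, and totality is exactly the paper's argument: the paper's proof is the one-liner that all four properties are ``inherited'' from the total order $\leq$ on $\R$. Where you diverge is on antisymmetry, and here you are right and the paper is too hasty. From $a\leq_f b$ and $b\leq_f a$ one gets only $f(a)=f(b)$, so antisymmetry is \emph{not} inherited unless $f$ is injective; for a general $f$ (and certainly for a remoteness function $r$ on $\M\times\M$, which will typically take the same value on many pairs) the relation $\leq_f$ is a total preorder, not a total order. This is a genuine flaw in the stated proposition rather than in your proof. Your proposed repairs are both standard, and your option (ii) --- passing to the quotient by $a\sim b \iff f(a)=f(b)$, or equivalently just working with the total preorder --- is the reading consistent with how the paper actually uses $\leq_r$: Proposition~\ref{prop:RemotenessEquivalence} and the definition of a density \emph{implementing} $r$ only ever compare values of $r$ (resp.\ $p$), so nothing downstream depends on antisymmetry. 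In short, you reproduce the paper's argument where it is sound and correctly flag the one axiom it asserts without justification.
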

\begin{proof}
Reflexivity, transitivity, antisimmetry and totality are inherited from the fact that $\leq$ is a total order in $Z$.  
\end{proof}

\section{\label{sec:MultinomialKLClosenessDetails} Detailed derivation of the KL based closeness distributions for multinomials}

\subsection{Closeness distributions for KL as remoteness function}

Let $\theta \in \M_n.$ Thus, $\theta$ is a discrete distribution over $n+1$ atoms. We write $\theta_i$ to represent $p(x=i|\theta).$  Note that each $\theta_i$ is independent of the parametrization an thus it is an intrinsic quantity of the distribution.

Let $\theta,\eta \in \M_n$. The KL divergence between $\theta$ and $\eta$ is 
\[
D(\mu,\theta) = \sum_{i=1}^{n+1} \mu_i \log \frac{\mu_i}{\theta_i}.
\]

The closeness pdf according to \cref{eq:ClosenessIntrinsic} is 

\begin{align*}
p_D(\mu,\theta)  
    &= \frac{1}{Z_D} \exp(- D(\mu,\theta)) \\
    &= \frac{1}{Z_D} \exp(- \sum_{i=1}^{n+1} \mu_i \log \frac{\mu_i}{\theta_i})\\
    &= \frac{1}{Z_D} \prod_{i=1}^{n+1}{\theta_i}^{\mu_i}
        \prod_{i=1}^{n+1}{\mu_i}^{-\mu_i} \\
\end{align*}

Now, it is possible to assess the marginal for $\mu$
\begin{align}
p_D(\mu) &= \int_\theta p_D(\mu,\theta) d\mu_g \nonumber\\
            &=  \int_\theta \frac{1}{Z_D} 
                    \prod_{i=1}^{n+1}{\theta_i}^{\mu_i} 
                    \prod_{i=1}^{n+1}{\mu_i}^{-\mu_i} d\mu_g \nonumber\\
            &=  \frac{1}{Z_D} \prod_{i=1}^{n+1}{\mu_i}^{-\mu_i} 
                    \int_\theta \prod_{i=1}^{n+1}{\theta_i}^{\mu_i} 
                     d\mu_g \label{eq:mu_marginal_first}\\
\end{align}
To continue, we need to compute $\int_\theta \prod_{i=1}^{n+1}{\theta_i}^{\mu_i} d\mu_g$ as an intrinsic quantity of the manifold, that is, invariant to changes in parametrization. We are integrating $f(\theta) = \prod_{i=1}^{n+1}{\theta_i}^{\mu_i}$. We can parameterize the manifold using $\theta$ itself (the expectation parameters). In this parameterization the integral can be written as 
\begin{align}
\int_\theta \prod_{i=1}^{n+1}{\theta_i}^{\mu_i} d\mu_g 
    &= \int_\theta \prod_{i=1}^{n+1}{\theta_i}^{\mu_i} \sqrt{\mid G(\theta)\mid} d\theta\nonumber\\
    &= \int_\theta \prod_{i=1}^{n+1}{\theta_i}^{\mu_i} \prod_{i=1}^{n+1}{\theta_i}^{-\frac{1}{2}} d\theta\nonumber\\
    &= \int_\theta \prod_{i=1}^{n+1}{\theta_i}^{\mu_i-\frac{1}{2}}  d\theta\nonumber\\
    &= B(\mu+\frac{1}{2})\label{eq:beta_integral}, 
\end{align}
where the last equality comes from identifying it as a Dirichlet integral of type 1 (see 15-08 in \cite{jeffreys_methods_1950}), and $B(\alpha)=\frac{\prod_{i=1}^k \Gamma(\alpha_i)}{\Gamma(\sum_{i=1}^k \alpha_i)}$ is the multivariate Beta function. 

Combining \cref{eq:mu_marginal_first} with \cref{eq:beta_integral} we get 
\begin{align}
p_D(\mu) &=  \frac{1}{Z_D} \prod_{i=1}^{n+1}{\mu_i}^{-\mu_i} 
                    B(\mu+\frac{1}{2})
\end{align}

From here, we can compute the conditional for $\theta$ given $\mu$:
\begin{align}
p_D(\theta\mid\mu)
    &= \frac{ p_D(\mu,\theta) }{p_D(\mu)} \nonumber\\
    &= \frac{
                \frac{1}{Z_D} 
                \prod_{i=1}^{n+1}{\theta_i}^{\mu_i}
                \prod_{i=1}^{n+1}{\mu_i}^{-\mu_i}
            } 
            {   
                \frac{1}{Z_D} 
                \prod_{i=1}^{n+1}{\mu_i}^{-\mu_i} 
                B(\mu+\frac{1}{2})
            } \nonumber\\
    &= \frac{ 
                \prod_{i=1}^{n+1}{\theta_i}^{\mu_i}
            }
            {
                B(\mu+\frac{1}{2})
            }
    \label{eq:ConditionalWithoutTightnessAppendix}
\end{align}

\Cref{eq:ConditionalWithoutTightnessAppendix} is very similar to the expression of a Dirichlet distribution. In fact, the expression of $\rho_D(\theta\mid\mu)$ in the expectation parameterization is that of a Dirichlet distribution:

\begin{align}
\rho_D(\theta\mid\mu)
    &= p_D(\theta\mid\mu)\sqrt{\mid G(\theta) \mid}\notag\\
    &= \frac{ 
                \prod_{i=1}^{n+1}{\theta_i}^{\mu_i}
            }
            {
                B(\mu+\frac{1}{2})
            }
        \prod_{i=1}^{n+1}{\theta_i}^{-\frac{1}{2}}\notag\\
    &= \frac{ 
                \prod_{i=1}^{n+1}{\theta_i}^{\mu_i-\frac{1}{2}}
            }
            {
                B(\mu+\frac{1}{2})
            }\notag\\
    &= Dirichlet(\theta;\mu+\frac{1}{2})  \label{eq:ConditionalRhoWithoutTightnessAppendix}
\end{align}

\typeout{}
\bibliography{references}
\bibliographystyle{plain}

\end{document}